\pgfplotsset{compat=1.18}
\theoremstyle{theorem}
\newtheorem{theorem}{Theorem}
\theoremstyle{lemma}
\newtheorem{lemma}[theorem]{Lemma}
\theoremstyle{corollary}
\newtheorem{corollary}[theorem]{Corollary}
\theoremstyle{definition}
\newtheorem{definition}[theorem]{Definition}
\theoremstyle{remark}
\newtheorem{remark}[theorem]{Remark}
\newcommand{\compind}{\stackrel{c}{\approx}}
\newcommand{\datauniverse}[0]{\mathcal{U}}
\newcommand{\hypspace}[0]{\mathcal{H}}
\newcommand{\trainset}[0]{\mathcal{D}}
\newcommand{\forgetset}[0]{\mathcal{D}_f}
\newcommand{\retainset}[0]{\trainset \setminus \forgetset}
\newcommand{\modelinit}[0]{\texttt{init}}
\newcommand{\learn}[0]{\texttt{learn}}
\newcommand{\unlearn}[0]{\texttt{unlearn}}
\newcommand{\infer}[0]{\texttt{infer}}
\newcommand{\cost}[0]{\texttt{cost}}
\newcommand{\util}[0]{\texttt{util}}
\newcommand{\origmodel}[0]{M_o}
\newcommand{\controlmodel}[0]{M_c}
\newcommand{\unlearnedmodel}[0]{M_u}
\newcommand{\adv}[0]{\mathcal{A}}
\newcommand{\chal}[0]{\mathcal{C}}
\newcommand{\kld}[0]{D_{\text{KL}}}
\newcommand{\cmark}{\color{Green}\ding{51}}
\newcommand{\xmark}{\color{Crimson}\ding{55}}
\title{Mirror Mirror on the Wall, Have I Forgotten it All?\\A New Framework for Evaluating Machine Unlearning}
\author[,1]{Brennon Brimhall$^\ast$}
\author[,1,2]{Philip Mathew\thanks{Equal contribution, listed alphabetically.}}
\author[1,2]{Neil Fendley}
\author[1]{Yinzhi Cao}
\author[1]{Matthew Green}
\affil[1]{Johns Hopkins University}
\affil[2]{Johns Hopkins University Applied Physics Laboratory}
\begin{document}

\maketitle

\begin{abstract}
    Machine unlearning methods take a model trained on a dataset $\trainset$ and a forget set $\forgetset$, then attempt to produce a model as if it had only been trained on $\retainset$. We empirically show that an adversary is able to distinguish between a mirror model (a control model produced by retraining without the data to forget) and a model produced by representative unlearning methods from the literature \autocite{2023FosterSSD, 2020Graves_Amnesiac, 2023Chundawat_bad_teaching, 2024Zhang_Certified_Deep_Unlearning}. We build distinguishing algorithms based on evaluation scores in the literature (i.e.\ membership inference scores) and Kullback-Leibler divergence.

    We propose a strong formal definition for machine unlearning called \emph{computational unlearning}. Computational unlearning is defined as the inability for an adversary to distinguish between a mirror model and a model produced by an unlearning method. If the adversary cannot guess better than random (except with negligible probability), then we say that an unlearning method achieves computational unlearning.

    Our computational unlearning definition provides theoretical structure to prove unlearning feasibility results. For example, our computational unlearning definition immediately implies that there are no deterministic computational unlearning methods for entropic learning algorithms. We also explore the relationship between differential privacy (DP)-based unlearning methods and computational unlearning, showing that DP-based approaches can satisfy computational unlearning at the cost of an extreme utility collapse. These results demonstrate that current methodology in the literature fundamentally falls short of achieving computational unlearning. We conclude by identifying several open questions for future work.
\end{abstract}

\section{Introduction}
Machine learning models require massive amounts of training data. Data is collected by scraping publicly available web content \autocite{2023David_OpenAICrawler, 2024Wired_AmazonInvestigating, 2024_AnthropicCrawler}, purchasing access to private databases \autocite{2024Knibbs_CondeNast, 2024OpenAI_CondeNast, 2024OpenAI_VoxMedia, 2024Verge_OpenAIVox, 2023AxelSpringer_OpenAI, 2023OpenAI_AxelSpringer, 2024Atlantic_OpenAI, 2024OpenAI_Atlantic}, and collecting data on their own to assemble training datasets \autocite{2022Schuhmann_laion5b, 2023Touvron_llama, 2020Brown_gpt3}. Due to the massive scale, datasets cannot be thoroughly vetted and may contain data that is copyrighted, inaccurate, protected, or contain otherwise undesirable information.

Legal protections exist for those who wish to protect their privacy, copyrighted content, and financial history in multiple countries. Examples include the EU GDPR (right to be forgotten) \autocite{2016EuropeanParliament_GDPR}, US DMCA (copyright infringement takedown) \autocite{1998USCongress_DMCA}, US FCRA (corrections to credit history) \autocite{1970USCongress_FCRA}, and US HIPAA (corrections to personal health data) \autocite{1996USCongress_HIPAA}. Specific instances of training data may also be illegal on their own: for example, it is illegal to possess child sexual abuse material (CSAM) in the US and in many other jurisdictions. Despite this, popular datasets \autocite{2022Schuhmann_laion5b} used to train models like Stable Diffusion contained illegal CSAM \autocite{2023Thiel_csam_in_laion}. Further, prior work has established the threat of data poisoning attacks that create undetectable backdoors in models \autocite{Goldwasser2022_backdoors}. This means that model data may be intentionally corrupted by an adversary.

These threats can be addressed by re-training the model from scratch without the offending data. However, since training large models is capitally and computationally intensive, a major area of interest is \emph{machine unlearning:} efficiently removing traces of the offending data, known as the as the \emph{forget set}, without training a new model from scratch \autocite{2015Cao_unlearning, 2016Abadi_DP_SGD, 2019GolatkarFF, 2020bourtoule_machine, 2020Graves_Amnesiac, 2021Gupta_adaptive_machine_unlearning, 2021Ullah_TV_Stability, 2022Nguyen_survey, 2023Chundawat_bad_teaching, 2023FosterSSD}.

\subsection{Our Contributions}
This work consists of three major contributions: (1) a new formal definition and framework for evaluating unlearning, (2) empirical results showing that many unlearning methods produce results that are distinguishable from a control, and (3) several theoretical implications of our framework.

\paragraph{Computational unlearning framework.} Our primary contribution is a new formal definition and framework for evaluating unlearning called \emph{computational unlearning} that we detail in \S\ref{sec:game}. In brief, computational unlearning tests the ability of an adversary to distinguish between a model produced by an unlearning method and a model trained from scratch with the forget set removed. If the adversary is only able to do so with negligible probability, then we say that the unlearning method achieves computational unlearning. Because the adversary is unable to distinguish between the control and unlearned models, it follows that all information about the forget set has been ``deleted'' by the unlearning method. The game is defined in both a white-box (i.e.\ adversary has full access to model parameters) and a black-box (i.e.\ adversary only has API access to model) setting.

\paragraph{Many unlearning methods do not achieve indistinguishability.} We construct two scoring methods $\texttt{MIAScore}$ and $\texttt{KLDScore}$ in \S\ref{sec:scores} which an adversary can use to distinguish between an unlearned model and a model that has never seen the forget set. We study previously proposed unlearning methods \autocite{2023FosterSSD, 2020Graves_Amnesiac, 2023Chundawat_bad_teaching, 2024Zhang_Certified_Deep_Unlearning} and show that each fail to achieve computational unlearning for ResNet-18 models \autocite{he2015deepresiduallearningimage} trained on CIFAR-10 \autocite{2009Krizhevsky_cifar} in \S\ref{sec:empirical}. We also experiment with how distinguishing rates are affected by the forget set size and unlearning method hyperparameters. 

\paragraph{Theoretical implications of computational unlearning.} We describe several implications of our computational unlearning framework in \S\ref{sec:theory}. We first show that any deterministic computational unlearning algorithms must achieve \emph{perfect unlearning} (i.e.\ it must produce the exact same model as retraining) and discuss implications for heuristic and certified removal unlearning methods. Second, we show that using differential privacy to achieve black-box computational unlearning leads to utility collapse (i.e.\ utility must be equivalent to a model that is randomly initialized).

\section{Background}
\label{sec:background}

Widespread interest in machine unlearning has led to a great deal of work in the area. We briefly review several approaches explored in the literature, extending the taxonomy proposed by Nguyen et al.~\autocite{2022Nguyen_survey}. We categorize machine unlearning method in one of three ways: as \emph{heuristic unlearning}, \emph{approximate unlearning}, or \emph{exact unlearning} as applied to classification models.

\paragraph{Heuristic unlearning.}
\label{sec:heuristic-examples}
Unlike exact and approximate unlearning methods, \emph{heuristic unlearning methods} do not have any formal guarantees. However, they are typically much less expensive than applying differential privacy or retraining the model~\autocite{2023FosterSSD, 2019GolatkarFF,2023Chundawat_bad_teaching,Kodge2023DeepUF,Tarun2021FastYE}. These rely on various heuristics that aim to minimize an unlearning ``score'' that that attempts to capture how well a machine learning model has forgotten. Membership inference attacks (MIA)~\autocite{2017Shokri_membership} are a popular scoring method used in the literature. 

We now describe three heuristic unlearning methods: \emph{bad teacher unlearning} \autocite{2023Chundawat_bad_teaching}, \emph{amnesiac unlearning} \autocite{2020Graves_Amnesiac}, and \emph{selective synaptic dampening (SSD)} \autocite{2023FosterSSD}. Each of these heuristic unlearning methods are evaluated on membership inference attack (MIA) scores; this is representative of many heuristic unlearning methods.

\begin{itemize}
    \item \emph{Bad teacher unlearning.} Bad teacher unlearning rests on the assumption that, after forgetting a data point, a model's behavior on that data point should be similar to that of a randomly initialized model. To forget $\forgetset$ the model is ``taught'' to reflect the behavior of a randomly initialized model (i.e.\ a \emph{bad teacher}).

    \item \emph{Amnesiac unlearning.} Amnesiac unlearning tries to reverse the changes to the model incurred by training on $\forgetset$ by keeping track of all batches containing elements from $\forgetset$; gradient \emph{ascent} is performed on these training batches at forget time. This attempts to ``backtrack'' towards a model that never had those gradient updates applied. We note that this approximates the approaches taken by many exact unlearning methods.

    \item \emph{Selective synaptic dampening (SSD).} SSD measures the $\forgetset$-related information in each neuron by using the Fisher information matrix (FIM). Neurons that contain lots of information about examples in $\forgetset$ are ``zeroed out'' by scaling down their weights. One can think of SSD as a pruning algorithm where ``branches'' of the network are ``removed'' based on their ``knowledge'' of $\forgetset$.
\end{itemize}

\paragraph{Approximate unlearning.}
\label{sec:approx-unlearning}
An \emph{approximate machine unlearning method} attempts to output a model that is approximately equal to a model trained without the forget set with high probability. Approximate machine unlearning methods are typically based on the notions of \emph{differential privacy}~\autocite{2014dwork_dp} and \emph{certified removal}~\autocite{2023Guo_Certified_Removal}.

\paragraph{Differential privacy.} Differential privacy~\autocite{2014dwork_dp} bounds the difference between outputs of a randomized algorithm on similar data sets. In the context of machine learning, this can be implemented as either (1) producing model parameters that are similar to the model parameters produced by training on a similar dataset or (2) producing an inference that is similar to the inference produced by a model trained on a similar dataset.

In the first case, we achieve a ``white-box'' differential privacy property because noise is incorporated into the model parameters during the training process. A practical example of such a learning method is \emph{differentially private stochastic gradient descent (DP-SGD)}~\autocite{2016Abadi_DP_SGD}.

In the second case, we achieve a weaker ``black-box'' differential privacy property because the noise is only integrated after training. Intuitively, this means a differentially private model has no guarantee to maintain privacy if you are given access to model parameters. Additionally, privacy budget is consumed at inference time; this forces an upper bound of the number of permitted queries.

\paragraph{Certified removal.} Certified removal draws inspiration from the aforementioned notion of differential privacy, extending a white box privacy guarantee to hold for a learning and unlearning method. Their aim is to bound the difference in model parameters produced by the unlearning method and the model parameters produced by the learning method without a particular data point in the training set:

\begin{definition}[$(\epsilon, \delta)-$Certified Removal \autocite{2023Guo_Certified_Removal}]
\label{def:certified-removal}
    $\learn, \unlearn$ satisfy $(\epsilon, \delta)-$certified removal if, for all $T \subseteq \hypspace$, $x \in \trainset \subseteq \datauniverse$,
    $$\mathbb{P}\left(\learn\left(\modelinit\left(1^\lambda\right), \trainset \setminus x\right) \in T\right) \leq e^{\epsilon}\mathbb{P}\left(\unlearn\left(\origmodel, \{x\}\right) \in T\right) + \delta
    $$
    and
    $$\mathbb{P}\left(\unlearn\left(\origmodel, \{x\}\right) \in T\right) \leq e^{\epsilon}\mathbb{P}\left(\learn\left(\modelinit\left(1^\lambda\right), \trainset \setminus x\right) \in T\right) + \delta
    $$ where $\origmodel = \learn\left(\modelinit\left(1^\lambda\right), \trainset \right)$.
\end{definition}

Note the similarity in this definition to the constraint imposed by differential privacy (DP). Indeed, Guo et al.\ mention that a DP learning algorithm is sufficient for $(\epsilon, \delta)-$certified removal. Certified removal extends the DP framework to a pair of methods $(\learn, \unlearn)$ instead of a single DP mechanism. As a result, certified removal allows for models with higher base performance, since less noise needs to be added during training than is required for DP.

Notably, Guo et al.~\autocite{2023Guo_Certified_Removal} introduced a \emph{Newton update removal mechanism} to remove information from linear models with convex objectives, which has been explored further by other works~\autocite{2023warnecke_featureslabels, 2022Mehta_DeepUV}. Zhang et al.~\autocite{2024Zhang_Certified_Deep_Unlearning} extend this to non-linear models with non-convex objectives via \emph{certified deep unlearning}. These methods allow practitioners to have some certification regarding how ``close'' the unlearned model is to a model retrained from scratch without the forget set.

Certified deep unlearning extends the Newton update removal mechanism (introduced in~\autocite{2023Guo_Certified_Removal}) to non-linear models with non-convex objective functions. Their approach resorts to using \emph{projected gradient descent (PGD)} \autocite{1997Bertsekas_PGD} for optimization, allowing for the guarantees introduced by Guo et al.\ \autocite{2023Guo_Certified_Removal} to be extended to these models.

Despite certified removal being defined as a \emph{white-box} property --- that is, an adversary seeing model parameters --- literature typically evaluates unlearning performance by computing membership inference attack scores (MIA)~\autocite{2017Shokri_membership} that do not have access to the model weights (i.e.\ a \emph{black-box} evaluation setting). We discuss certified removal methods in \S\ref{sec:empirical}.

\paragraph{Exact unlearning.}
An \emph{exact unlearning method} modifies the original model such that its outputs exactly match a model trained without the forget set. We are unaware of any exact unlearning method for neural networks that does not involve some degree of retraining. The most common approaches rely on saving checkpoints of model state at train time~\autocite{2020bourtoule_machine, 2021Ullah_TV_Stability}. Unlearning then consists of rewinding to a checkpoint that has not been influenced by the forget set and then resuming training from that point without the forgotten data. This technique is essentially a time-space tradeoff; multiple checkpoints of the model must be saved out during training. The worst-case retraining cost may be equivalent to retraining the model from scratch (for example, if the forget set contains an element from the first batch). We do not study exact unlearning in this work.

\paragraph{Unlearning evaluation.}

Many machine unlearning works attempt to justify their approach by optimizing some \emph{unlearning score}. Examples include accuracy gap scores and membership inference attacks:

\begin{itemize}
\item {\itshape Accuracy Gap Scores:} A popular approach in prior work is to demonstrate an ``accuracy gap'' between the unlearned model's performance when queried on the forget set (the set of examples to be forgotten) and the retain set (the training set minus the forget set) in a black-box manner~\autocite{2015Cao_unlearning, 2020bourtoule_machine, 2024Cha_instancewise_unlearning, 2019GolatkarFF}. The intuition here is that the unlearning method has ``remembered'' the retain set (because it maintains good accuracy) but ``forgotten'' the forget set (because it has lost accuracy on the items to be forgotten).

\item {\itshape Membership Inference Attack Scores:} Membership inference attacks, formalized in~\autocite{2017Shokri_membership}, are another common way to evaluate the performance of machine unlearning algorithms in literature. An attacker is given a data point and black-box access to a machine learning model; the attacker then attempts to predict if the data point was part of the training set. Some heuristic machine unlearning proposals are specifically designed to minimize these scores~\autocite{2020Graves_Amnesiac, 2023Chundawat_bad_teaching, 2023FosterSSD}.

\end{itemize}

Framing machine unlearning in a score-based manner is attractive: it provides an easy way to facilitate comparison, and it also satisfies intuitive beliefs about how the model should behave after unlearning. However, the use of score-based definitions does not fully capture the expectations of model behavior after unlearning.

Prior work has demonstrated multiple issues with unlearning evaluation:

\begin{itemize}
    \item {\itshape Over-Unlearning:} Hu et al.~\autocite{hu2024dutyforgetrightassured} considers the phenomenon of {\em over-unlearning}, where an attacker with black-box access to a model has the model provider run an unlearning method adversarially with the intent of reducing overall model utility. A key feature of this scenario is that an attacker can adaptively issue requests to forget data.
    
    \item {\itshape Unlearning Inversion:} Hu et al.~\autocite{hu2024learnwantunlearnunlearning} shows that many machine unlearning techniques are vulnerable to {\em unlearning inversion attacks}. In this scenario, an attacker with white-box access is able to recover an unlearned dataset by considering the differences between an original and unlearned model. This is similar in spirit to membership inference attacks.
    
    \item {\itshape Forgeability:} Recent work~\autocite{sekhari2021remember, 2022Thudi_Auditable} demonstrates that current machine unlearning definitions permit \emph{forgeability}. Forgeability is achieved if an adversary can \emph{forge} a proof that demonstrates their model was trained without the forget set despite being trained with it it. An adversary is permitted to order the retained data points in batches. This allows them to arrive at the same weights as a model trained on the forget set.
\end{itemize}

We believe that these issues fundamentally stem from the score-based approach to evaluating machine unlearning and that repairing this requires a fundamentally different formal definition of unlearning based on indistinguishability.

\paragraph{Machine unlearning is indistinguishability.}
We first motivate our desired functionality through the lens of the $k$-nearest neighbors ($k$-NN) machine learning algorithm. $k$-NN is a simple learning algorithm that memorizes every training example it is presented with. At inference time, the model finds the $k$ nearest training examples according to some metric and classifies based on the their round truth labels. One of the properties of $k$-NN is that it immediately admits an unlearning algorithm: simply delete the training examples you wish to forget. This produces a model that is indistinguishable from a control.

We claim that \emph{indistinguishability} represents a superior way to evaluate unlearning. This idea is not new and features in prior work~\autocite{2024Zhang_Certified_Deep_Unlearning, 2023Guo_Certified_Removal, 2023FosterSSD} but is not measured directly. We propose doing so here. In other words, no efficient (p.p.t., or probabilistic polynomial time) adversary $\adv$ should be able to distinguish between $\unlearnedmodel$ and $\controlmodel$. We also assume that $\adv$ has access to $\origmodel$, $\learn$, $\unlearn$, $\trainset$, and $\forgetset$.
\section{Formalizing Unlearning}
\label{sec:game}

We propose \emph{computational machine unlearning} as a formal way to capture that machine unlearning is indistinguishability. Unlike prior machine unlearning scores, our definition is defined as a security game, inspired by the cryptographic notion of semantic security and indistinguishability under chosen plaintext attack (IND-CPA) \autocite{2023Boneh_Textbook}. Instead of considering a membership inference score or accuracy gap, computational unlearning considers the ability of an adversary to distinguish between an unlearned model and a control model.

\subsection{Preliminaries}

Let $\datauniverse$ be the universe of all possible data, and $d \in \datauniverse$ be a particular data point. Let $\trainset \subseteq \datauniverse$ be our entire training dataset with $\forgetset \subseteq \trainset$ be the forget set. Let $\hypspace$ be our hypothesis space of possible models, with $h \in \hypspace$ being a particular model.

\begin{definition}[Learning scheme]
    We formally define a \emph{learning scheme} as a tuple of probabilistic polynomial time (p.p.t.) algorithms $(\modelinit, \learn, \infer)$:
    \begin{itemize}
        \item $\modelinit(1^\lambda) \rightarrow h$: randomly samples some initial model $h$. The notation $1^\lambda$ simply denotes that there are $\lambda$ copies of the symbol $1$ written on the input tape of the Turing machine and $0$ in every other location. This ensures that $\modelinit$ runs in polynomial time with respect to $\lambda$, a cryptographic formality.
        \item $\learn(h, \trainset) \rightarrow h$: given some initial model $h$, performs some model update process with respect to the training set $\trainset$.
        \item $\infer(h, d) \rightarrow \mathbb{R}^n$: performs some inference procedure with the given model $h$ on the provided data point $d$. 
    \end{itemize}
\end{definition}

\begin{remark}[Baseline and meaningful utility]
A newly initialized model under some learning scheme (i.e.\ the output of $\modelinit$) is expected to only have some baseline utility. Formally, we say the following:

$$\mathbb{E}(\util(\modelinit(1^\lambda))) = b$$

\noindent for some $b$, where $\modelinit(1^\lambda)$ initializes a model and does no training on it. We say utility is {\em meaningful} if it is larger than $b$.
\end{remark}

\begin{definition}[Forgetting learning scheme]
    We likewise define a \emph{forgetting learning scheme} as a tuple of p.p.t algorithms $(\modelinit, \learn, \infer, \unlearn)$ such that it is a learning scheme with an additional $\unlearn$ algorithm:
    \begin{itemize}
        \item $\unlearn(h, \forgetset) \rightarrow h$: performs some model update process with respect to the forget set $\forgetset$. 
    \end{itemize}
\end{definition}

\begin{remark}[Cost and utility functions]
We also assume the existence of cost and utility functions for learning schemes and forgetting learning schemes.

$$\cost: \Phi \rightarrow \mathbb{R}^+$$
$$\util: \hypspace \rightarrow \mathbb{R}^+$$

\noindent where $\cost$ measures the expense of performing a model update algorithm (e.g.\ $\learn,\ \unlearn$) and $\util$ is a function that measures the performance of a particular model in the hypothesis space.
\end{remark}

\begin{definition}[Negligible function]
We define $\mathrm{negl}(\lambda)$ to be a function that is \emph{negligible} in terms of a security parameter $\lambda$. We borrow the definition of a negligible function from cryptography --- namely, that a function $f : \mathbb{Z}_{\geq 1} \rightarrow \mathbb{R}$ is negligible if and only if for all $c > 0$ we have:

$$\lim\limits_{n \rightarrow \infty} f(n) n^c = 0$$

\noindent For example, if a function is bounded by $2^{-\lambda}$ then that function is negligible with respect to $\lambda$.
\end{definition}

\subsection{Computational Unlearning}

We now formally define {\em computational unlearning} in both white-box and black-box settings. As described earlier, these definitions are inspired by the cryptographic notions of semantic security and indistinguishability under chosen plaintext attack (IND-CPA)~\autocite{2023Boneh_Textbook}. We assume a computationally bound adversary and allow a negligible adversary advantage in keeping with these formal cryptographic definitions.

\begin{figure}
    \centering
    \includegraphics[width=0.45\linewidth]{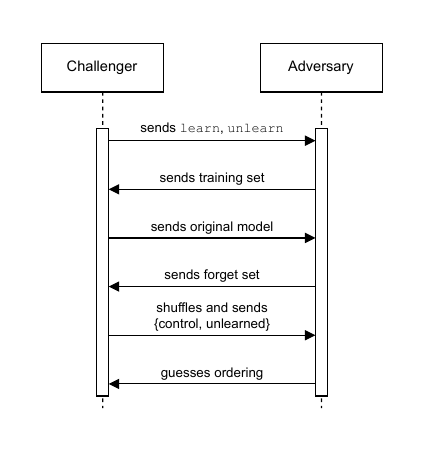}
    
    \caption{Overview of the security game for computational unlearning.}
    \label{fig:computational-unlearning}
\end{figure}

\begin{definition}[White-Box Computational Unlearning]
\label{def:white-box-computational-unlearning}
We consider the following experiment:
    \begin{enumerate*}
    \raggedright
        \item $\chal$ sends the description of the forgetting learning scheme (i.e.\ the $\learn$ and $\unlearn$ algorithms).
        \item $\adv$ chooses $\trainset$ and sends it to $\chal$.
        \item $\chal$ computes $\origmodel \leftarrow \learn(\modelinit(1^\lambda), \trainset)$ and sends $(\origmodel, \learn, \unlearn, \trainset, \cost, \util)$ to $\adv$.
        \item $\adv$ selects a forget set $\forgetset \subset \trainset$ and sends $\forgetset$ to $\chal$.
        \item $\chal$ computes $\unlearnedmodel \leftarrow \unlearn(\origmodel, \forgetset)$ and computes $\controlmodel \leftarrow \learn(\modelinit(1^\lambda), \retainset)$.
        \item $\chal$ samples a random bit $b \overset{{\$}}{\leftarrow} \{0, 1\}$. If $b = 0$, $\chal$ sends $[\controlmodel, \unlearnedmodel] $. If $b = 1$, $\chal$ sends $[\unlearnedmodel, \controlmodel]$.
        \item $\adv$ computes a guess $b'$ and sends $b'$ to $\chal$. $\adv$ wins the game if $b' = b$.
    \end{enumerate*}
    
    We say that an unlearning algorithm is a \emph{white-box computational machine unlearning algorithm} if
    
    $$\mathbb{P}\left(b' = b\right) < \frac{1}{2} + \mathrm{negl}(\lambda)$$

    \noindent We denote this computational indistinguishability by saying $\unlearnedmodel \compind \controlmodel$. This game is illustrated in~\autoref{fig:computational-unlearning}.
\end{definition}

\begin{definition}[Black-Box Computational Unlearning]
\label{def:black-box-computational-unlearning}
We consider the following experiment:
    \begin{enumerate*}
    \raggedright
        \item $\chal$ sends the description of the forgetting learning scheme (i.e.\ the $\learn$ and $\unlearn$ algorithms).
        \item $\adv$ chooses $\trainset$ and sends it to $\chal$.
        \item $\chal$ computes $\origmodel \leftarrow \learn(\modelinit(1^\lambda), \trainset)$ and sends $(\origmodel, \learn, \unlearn, \trainset, \cost, \util)$ to $\adv$.
        \item $\adv$ selects a forget set $\forgetset \subset \trainset$ and sends $\forgetset$ to $\chal$.
        \item $\chal$ computes $\unlearnedmodel \leftarrow \unlearn(\origmodel, \forgetset)$ and computes $\controlmodel \leftarrow \learn(\modelinit(1^\lambda), \retainset)$.
        \item $\chal$ samples a random bit $b \overset{{\$}}{\leftarrow} \{0, 1\}$. If $b = 0$, $\chal$ sends $[\mathcal{O}_{\controlmodel}, \mathcal{O}_{\unlearnedmodel}]$ where $\mathcal{O}$ is an oracle that allows $\adv$ to call $\infer$ on the underlying model. If $b = 1$, $\chal$ sends $[\mathcal{O}_{\unlearnedmodel}, \mathcal{O}_{\controlmodel}]$. 
        \item $\adv$ computes a guess $b'$ and sends $b'$ to $\chal$. $\adv$ wins the game if $b' = b$.
    \end{enumerate*}
    
    We say that an unlearning algorithm is a \emph{black-box computational machine unlearning algorithm} if we have:
    
    $$\mathbb{P}\left(b' = b\right) < \frac{1}{2} + \mathrm{negl}(\lambda)$$
\end{definition}

\begin{remark}[Threat Model]
    This definition intuitively captures a setting inspired by the GDPR process: we assuming the adversary is a user who can select which data should be deleted (i.e.\ the set of items to be deleted is, to some extent, adversarially-controlled) as in~\autocite{hu2024dutyforgetrightassured}. We also acknowledge that this game defines a very strong adversary and that a real-world adversary may not have access to the full training set, the description of the unlearning algorithm, or other information provided in this game. However, each of these alternatives envisions a strictly weaker adversary than our computational learning game, meaning that an unlearning method that achieves computational unlearning would still be indistinguishable from a control model in these scenarios.
\end{remark}

\begin{remark}[Trivial Solutions]
    Observe that there is a trivial solution for computational unlearning: simply defining $\unlearn$ to call $\modelinit(1^\lambda)$ and emit new models whose weights are initialized randomly. To prevent these trivial solutions, we require that $|\util(\origmodel) - \util(\controlmodel)| < \epsilon$ for some small $\epsilon$ and that $\cost\left(\learn\left(\retainset\right)\right) < \cost\left(\unlearn\left(\origmodel, \forgetset\right)\right)$.
\end{remark}

\begin{remark}[Utility Equivalence]
    We note that if the definition above is met, then $\util(\controlmodel) \compind \util(\unlearnedmodel)$ implicitly holds. If this was not the case, it would allow $\adv$ to distinguish the two models.
\end{remark}
\section{Empirical Results}
\label{sec:empirical}

We now present empirical distinguishers for $\adv$ to evaluate if unlearning methods from literature achieve computational unlearning. We experimentally demonstrate the effectiveness of these distinguishing algorithms on heuristic unlearning and certified removal methods.

\subsection{Distinguisher Scores}
\label{sec:scores}
Each distinguisher for $\adv$ uses a \emph{scoring function} to separate $\controlmodel$ from $\unlearnedmodel$.
The scoring function takes in the original model $\origmodel$, a candidate model $M \in \{M_1, M_2\}$, the training set $\trainset$, and the forget set $\forgetset$. The scoring function then outputs a value $s$ that is used to determine if the candidate model is $\unlearnedmodel$ or $\controlmodel$.

\paragraph{Scoring with membership inference attacks.}
\label{sec:miascore}
As described in \S\ref{sec:background}, membership inference attacks (MIA) are a common method for evaluating the performance of a given unlearning algorithm and several unlearning methods are justified by reducing them as much as possible. However, we are able to leverage these scores to distinguish an unlearned model from a control model \emph{because the unlearning method will often produce models whose MIA scores are out of distribution}. We propose that an unlearning algorithm should achieve similar MIA scores to a model that never saw the forget set rather than attempting to absolutely minimize it. In practice, we use the approach of Shokri et al.\ \autocite{2017Shokri_membership} for computing MIA scores using the same implementation as \autocite{2023FosterSSD}. We refer to this scoring algorithm as $\texttt{MIAScore}$.

\paragraph{Scoring with Kullback-Leibler divergence.}
\label{sec:kldscore}

We also present a novel scoring method \texttt{KLDScore}. We drew inspiration from the fact that Certified Removal bounds the KL-Divergence between different models. To calculate the score, $\adv$ calculates the KL-Divergence between the inferences of the original model $\origmodel$ and the candidate model $M$ (such as on instances in or near the forget set). This provides a measure of how different the behaviors of $M$ and $\origmodel$ are. In practice, we find that models produced by unlearning methods have much lower divergence from the original model than a control.

\begin{equation}
    \texttt{KLDScore}(\origmodel, M, \trainset, \forgetset) = \sum_{x_i \in \forgetset}{\kld(M(x_i + \mathcal{N}(0, 0.1))~\|~ \origmodel(x_i + \mathcal{N}(0, 0.1)))}
    \label{eqn:kldscore}
\end{equation}
\noindent where $\mathcal{N}(0, 0.1)$ represents Gaussian noise with mean $0$ and variance $0.1$.

\paragraph{Choice of decision rule.}
$\adv$ will compute $b'$ using the results from one of the aforementioned scoring algorithms. By Definitions \ref{def:white-box-computational-unlearning} and \ref{def:black-box-computational-unlearning}, $\adv$ is free to use prior knowledge of $\learn, \unlearn, \trainset$, and $\forgetset$ in the decision rule. We refer the reader to Kerckhoffs's principle in cryptography.

\subsection{Experimental Results}
\label{sec:results}
We evaluate the distinguishers via their success rates in differentiating between $\unlearnedmodel$ and $\controlmodel$. We present our findings from two experiments: one varying the size of the forget set $\forgetset$ and the other varying the $\sigma$ parameter from Certified Deep Unlearning. Both experiments were run using an Intel Xeon Gold 6330 and a NVIDIA A40. All results are statistically significant (i.e.\ a 95\% confidence interval under a Beta distribution with the Jeffries prior does not contain 50\%).

\paragraph{Implementation Details}
\label{sec:params}
All models used the ResNet-18~\autocite{2015He_resnet} architecture. The original and control models were trained using stochastic gradient descent with momentum and weight decay. The hyperparameters used are as follows:
\begin{itemize}
    \item Number of epochs: $50$
    \item Batch size: $512$
    \item Learning rate: $10^{-2}$
    \item Weight decay: $5 \times 10^{-4}$
\end{itemize}

For SSD~\autocite{2023FosterSSD}, we used a dampening constant of $1$ and a selection weighting of $100$. For all other methods \autocite{2023Chundawat_bad_teaching, 2020Graves_Amnesiac, 2024Zhang_Certified_Deep_Unlearning}, we used the parameters specified in their original papers (with the exception of $\sigma$ for CDU~\autocite{2024Zhang_Certified_Deep_Unlearning}, which we varied as described below).

\paragraph{Forget set size.}
We evaluated the effect of the forget set size on four different unlearning techniques. We used three heuristic methods and the approximate technique Certified Deep Unlearning (CDU) all discussed in \S\ref{sec:background}.

For each method, a random subset of $\trainset$ was chosen as the forget set. We varied the forget set size to evaluate its effect on the ability of $\adv$ to distinguish between $M_u$ and $M_c$ and correctly guess $b'$ using the distinguishing algorithms discussed above. We ran 128 trials, each with a different randomly selected forget set. We found that with increased forget set size the adversary was able to correctly guess $b'$ with higher frequency, but always maintained above a 60\% success rate at every forget set size. As we hypothesized, many heuristic unlearning techniques over-minimized \texttt{MIAScore} during their process of unlearning: for all heuristic unlearning methods the decision rule assigns a lower \texttt{MIAScore} score to $\unlearnedmodel$ (except for SSD~\autocite{2023FosterSSD} with greater than 30 forget set examples).

\begin{figure*}[h]
    \begin{subfigure}{0.5\textwidth}
    \centering
    \begin{tikzpicture}
        \begin{axis}[
            width=\textwidth,
            height=8cm,
            xlabel={Size of Forget Set},
            ylabel={$\adv$ Success Rate},
            grid=major,
            log ticks with fixed point,
            legend style={
                at={(0.98,0.02)},
                anchor=south east,
            },
            ymin=.45,
            ymax=1.05,
            xmode=log,
            log basis x=10,
        ]
        \addplot[
            color=blue,
            mark=*,
        ] table[
            col sep=comma,
            x=forget_set_size,
            y=win_percentage
        ] {
            forget_set_size,win_percentage
            3,0.984375
            6,0.9921875
            30,1.0
            60,1.0
            300,1.0
            600,1.0
            3000,1.0
        };
        \addlegendentry{Amnesiac}
        \addplot[
            color=green,
            mark=*,
        ] table[
            col sep=comma,
            x=forget_set_size,
            y=win_percentage
        ] {
            forget_set_size,win_percentage
            3,0.8203125
            6,0.9296875
            30,1.0
            60,1.0
            300,1.0
            600,1.0
            3000,1.0
        };
        \addlegendentry{Bad Teacher}
        \addplot[
            color=red,
            mark=*,
        ] table[
            col sep=comma,
            x=forget_set_size,
            y=win_percentage
        ] {
            forget_set_size,win_percentage
            3,0.6640625
            6,0.671875
            30,0.640625
            60,0.890625
            300,1.0
            600,1.0
            3000,1.0
        };
        \addlegendentry{SSD}
        \addplot[
            color=black,
            mark=*,
        ] table[
            col sep=comma,
            x=forget_set_size,
            y=win_percentage
        ] {
            forget_set_size,win_percentage
            3,0.9375
            6,0.96875
            30,1.0
            60,1.0
            300,1.0
            600,1.0
            3000,1.0
        };
        \addlegendentry{CDU}
        \legend{}
        \end{axis}
        \end{tikzpicture}
    \caption{\texttt{KLDScore}}
    \end{subfigure}
    \begin{subfigure}{0.5\textwidth}
        \begin{tikzpicture}
            \begin{axis}[
                width=\textwidth,
                height=8cm,
                xlabel={Size of Forget Set},
                grid=major,
                log ticks with fixed point,
                legend style={
                    at={(0.98,0.02)},
                    anchor=south east,
                },
                ymin=.45,
                ymax=1.05,
                xmode=log,
                log basis x=10,
                yticklabel=\empty,
            ]
            \addplot[
                color=blue,
                mark=*,
            ] table[
                col sep=comma,
                x=forget_set_size,
                y=win_percentage
            ] {
                forget_set_size,win_percentage
                3,0.70703125
                6,0.83984375
                30,0.96875
                60,0.98828125
                300,1.0
                600,1.0
                3000,1.0
            };
            \addlegendentry{Amnesiac}
            \addplot[
                color=green,
                mark=*,
            ] table[
                col sep=comma,
                x=forget_set_size,
                y=win_percentage
            ] {
                forget_set_size,win_percentage
                3,0.8515625
                6,0.96875
                30,1.0
                60,1.0
                300,1.0
                600,1.0
                3000,1.0
            };
            \addlegendentry{Bad Teacher}
            \addplot[
                color=red,
                mark=*,
            ] table[
                col sep=comma,
                x=forget_set_size,
                y=win_percentage
            ] {
                forget_set_size,win_percentage
                3,0.66015625
                6,0.6953125
                30,0.6796875
                60,0.734375
                300,1.0
                600,1.0
                3000,1.0
            };
            \addlegendentry{SSD}
            \addplot[
                color=black,
                mark=*,
            ] table[
                col sep=comma,
                x=forget_set_size,
                y=win_percentage
            ] {
                forget_set_size,win_percentage
                3,0.65625
                6,0.578125
                30,0.921875
                60,0.984375
                300,1.0
                600,1.0
                3000,1.0
            };
            \addlegendentry{CDU}
            \end{axis}
    \end{tikzpicture}
    \caption{\texttt{MIAScore}}
    \end{subfigure}
    \caption{Forget set size against adversary success rate using \texttt{KLDScore} and \texttt{MIAScore} distinguishers.}
    \label{fig:forgetset-results}
\end{figure*}
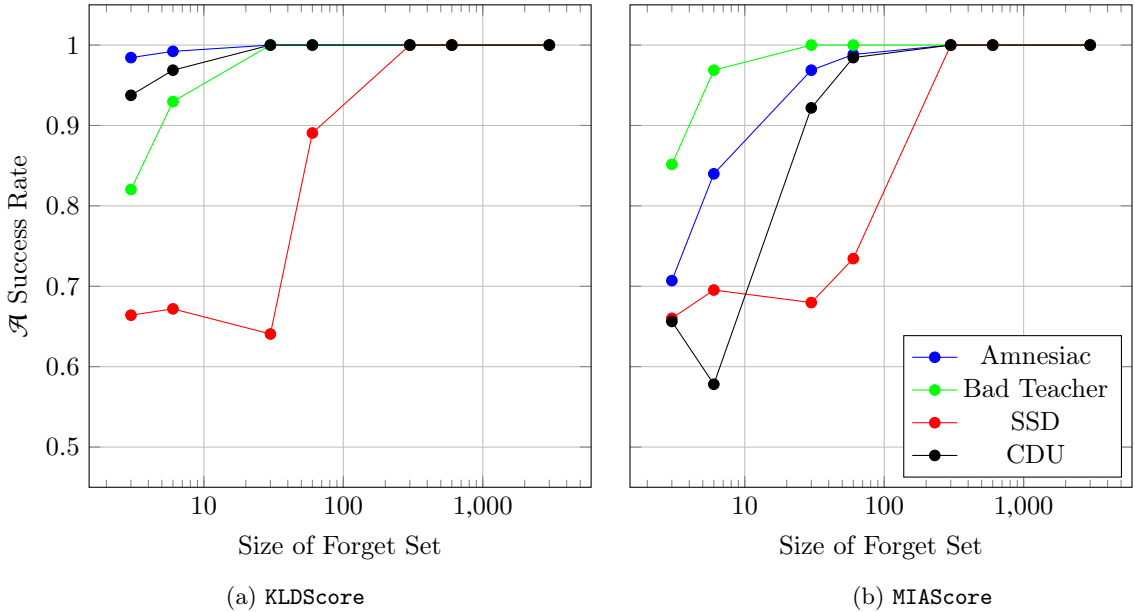

We also explored \emph{classwise} unlearning, where an entire class in $\trainset$ is chosen as the forget set. We found it always possible to distinguish in this setting (e.g.\ 100\% adversary success rate under both distinguishers). This is unsurprising given our results on the impact of forget set size. Recall that CIFAR-10 has 50,000 images in the training set, distributed evenly across 10 classes; forgetting an entire class amounts to a forget set size of 5,000 \autocite{2009Krizhevsky_cifar}.

\paragraph{Dependence on $\sigma$.}
We additionally explored the relationship between computational unlearning and certified removal's privacy parameters. For this we examined $\adv$'s \texttt{KLDScore} for certified deep unlearning (CDU) from Zhang et al.\ \autocite{2024Zhang_Certified_Deep_Unlearning} with different hyperparameters. The CDU method is based on a single hyperparameter $\sigma$, derived from $\epsilon$ and $\delta$ values, that represents the magnitude of noise used. We follow the hyperparameters from the CDU published experiments \autocite{2024Zhang_Certified_Deep_Unlearning}, including a random forget set of 1000 data points. We then varied $\sigma$ from $10^{-5}$ to $10^{-1}$ in powers of 10 running 128 trials at each value. 

\begin{figure*}[h]
    \centering
    \begin{tikzpicture}
        \begin{loglogaxis}[
            width=\textwidth,
            height=8cm,
            xlabel={$\sigma$},
            ylabel={\texttt{KLDScore}},
            grid=major,
            legend style={
                at={(0.02,0.98)},
                anchor=north west,
            },
            xmode=log,
            ymode=log,
            log basis x=10,
            log basis y=10,
        ]
        \addplot[
            color=blue,
            mark=*,
        ] table[
            col sep=comma,
            x=sigma,
            y=score
        ] {
            score,sigma
            0.34789903461933100,0.00001
            0.34789903461933100,0.0001
            0.34789903461933100,0.001
            0.34789903461933100,0.01
            0.34789903461933100,0.1
        };
        \addlegendentry{Control (Retrain)}
        \addplot[
            color=red,
            mark=*,
        ] table[
            col sep=comma,
            x=sigma,
            y=score
        ] {
            score,sigma
            0.0001117897413677900,0.00001
            0.00016919532754395900,0.0001
            0.005248369881883210,0.001
            6.343328133225440,0.01
            2086940021096448.0,0.1
        };
        \addlegendentry{Certified Deep Unlearning}
        \end{loglogaxis}
    \end{tikzpicture}
    
    \caption{Certified Deep Removal against \texttt{KLDScore} for different values of $\sigma$.} 
    \label{fig:cdu-sigma-kld}
\end{figure*}
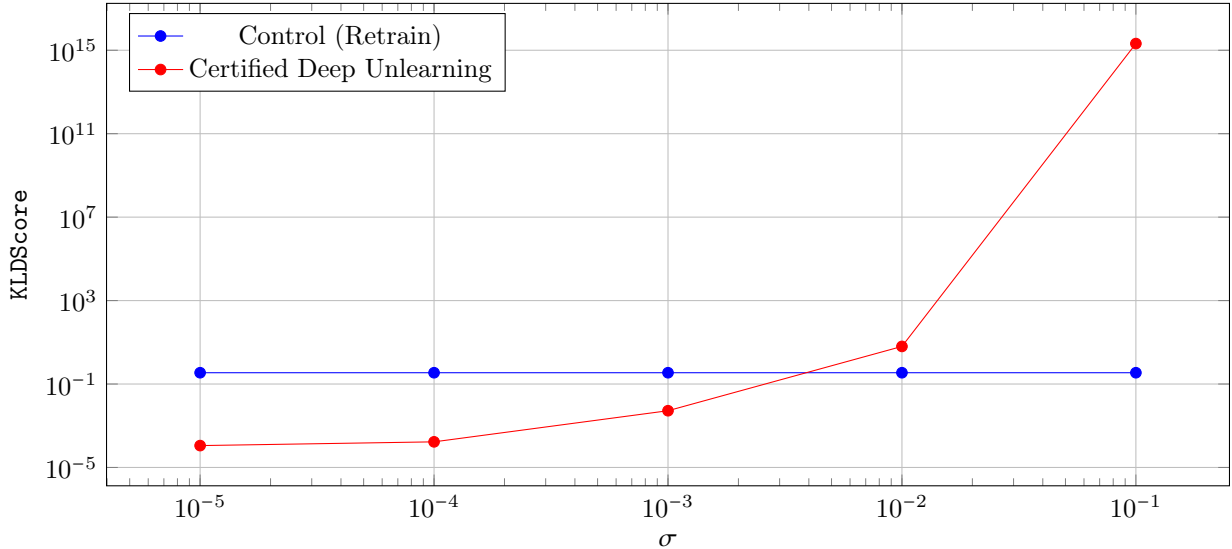

In our experiments we found that the adversary was able to distinguish using $\texttt{KLDScore}$ with 100\% accuracy for all choices of $\sigma$. We found as $\sigma$ increases the unlearned model's \texttt{KLDScore} also increases (see \autoref{fig:cdu-sigma-kld}). Since the control model has no dependency on $\sigma$, an adversary can distinguish with extremely high success rate by choosing a decision rule appropriate for the chosen value of $\sigma$. This relationship does imply there is a point of intersection (between 0.001 and 0.01) where the \texttt{KLDScore} score for $\unlearnedmodel$ and $\controlmodel$ should be very close, making it harder to distinguish using \texttt{KLDScore}. We believe understanding the intersection constitutes an interesting topic for future work.
\section{Theoretical Analysis}
\label{sec:theory}

We now show several interesting consequences of our computational unlearning definition. We begin by showing that $k$-NN admits a white-box computational unlearning algorithm in line with the technical intuition from \S\ref{sec:background}.

\begin{theorem}[$k$-NN admits white-box computational unlearning]
    \label{thm:knn-computational-unlearning}
    There is an efficient white-box computational unlearning algorithm for $k$-NN models.
\end{theorem}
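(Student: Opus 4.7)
The plan is to exhibit an explicit white-box computational unlearning algorithm for $k$-NN and argue that it achieves \emph{perfect} indistinguishability, which is strictly stronger than the negligible-advantage bound required by Definition~\ref{def:white-box-computational-unlearning}.

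First I would fix the $k$-NN forgetting learning scheme. Let $\modelinit(1^\lambda)$ output an empty store; let $\learn(h, \trainset)$ insert every point of $\trainset$ into $h$ in a canonical form (for concreteness, the lexicographically sorted list of (feature, label) pairs); let $\infer(h, d)$ perform standard $k$-NN classification on that store; and let $\unlearn(h, \forgetset)$ delete each point of $\forgetset$ from $h$ while preserving the canonical form. All four algorithms run in polynomial time, and $\unlearn$ runs in time proportional to $|\forgetset| + |\trainset|$, so the scheme is efficient in the p.p.t.\ sense required by the preliminaries.

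Next I would instantiate the security game. With the canonical representation, $\unlearnedmodel = \unlearn(\origmodel, \forgetset)$ and $\controlmodel = \learn(\modelinit(1^\lambda), \retainset)$ are the canonical encoding of the exact same set $\retainset$, hence are bit-for-bit identical. Therefore the ordered pair $\chal$ returns in step~6 is the same object regardless of the challenge bit $b$, so $\adv$'s view is independent of $b$ and $\Pr[b' = b] = 1/2$ for every (even unbounded) $\adv$. This is well below $1/2 + \mathrm{negl}(\lambda)$.

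Finally I would dispense with the non-triviality conditions from the Trivial Solutions remark. For utility, $\origmodel$ and $\controlmodel$ agree on every query whose $k$ nearest neighbors in $\trainset$ all lie outside $\forgetset$, so a simple counting argument bounds $|\util(\origmodel) - \util(\controlmodel)|$ by a quantity that tends to zero as $|\forgetset|/|\trainset| \to 0$. For cost, $\unlearn$ performs $|\forgetset|$ deletions whereas $\learn$ on $\retainset$ rebuilds the entire store over $|\retainset|$ points, so unlearning is strictly cheaper than retraining, preventing the degenerate solution. I do not anticipate a substantive obstacle here --- the essential reason $k$-NN is easy is that ``training'' is pure memorization, so in-place deletion and from-scratch retraining yield the identical object. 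The only delicate point is committing to a canonical store: without it, $\learn$ and $\unlearn$ might emit different bitstrings encoding the same set, which would let a white-box $\adv$ distinguish on syntax alone even though the underlying classifiers are the same.
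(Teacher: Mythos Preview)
Your proposal is correct and follows essentially the same approach as the paper's proof: define $\unlearn$ as deletion from the $k$-NN store and observe that the resulting model is identical to $\controlmodel$, so no adversary can distinguish. You are more careful than the paper in two respects---fixing a canonical representation to rule out syntactic distinguishers and checking the non-triviality side conditions---but the core idea is the same.
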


\begin{proof}[Proof of Theorem \ref{thm:knn-computational-unlearning}]
    Let $\learn$ be defined as normal for $k$-NN models. Let $\unlearn$ be defined as deleting the specified $\forgetset$ from the $k$-NN database. Observe that this produces the same database as $\learn$ on $\retainset$. Therefore, an adversary cannot distinguish between $\unlearnedmodel$ and $\controlmodel$ with non-negligible advantage because they are exactly the same model.
\end{proof}

We first show that for entropic machine learning algorithms (e.g.\ stochastic gradient descent) there are no deterministic algorithms that can achieve computational unlearning. This result means that many heuristic unlearning methods can never admit computational unlearning algorithms. Secondly, we show that differentially private algorithms can achieve computational unlearning at the cost of collapsing model utility.

\subsection{Deterministic Computational Unlearning does not exist}
\label{sec:no-deterministic-computational-unlearning}

\paragraph{Preliminaries.}
We now show that a forgetting learning scheme that is entropic must have a randomized unlearning algorithm. Additionally, we show that a forgetting learning scheme that is deterministic must achieve perfect unlearning. Because forgetting learning schemes that are entropic must must be randomized and because forgetting learning schemes that are deterministic must be perfect, we say that \emph{deterministic computational learning does not exist}.

Before beginning, we define \emph{entropic learning schemes} and \emph{perfect unlearning}.

\begin{definition}[Deterministic learning scheme]
    A learning scheme is deterministic if the distribution of models produced by $\learn\left(\modelinit\left(1^\lambda\right), \trainset\right)$ has Shannon entropy of $0$.
\end{definition}

\begin{definition}[Entropic learning scheme]
    A learning scheme is \emph{$h$-entropic} if the distribution of models produced by $\learn\left(\modelinit\left(1^\lambda\right), \trainset\right)$ has Shannon entropy greater than or equal to $h$. In the absence of a particular value specified for $h$, we take $h$ to be $1$ bit.
\end{definition}

\begin{remark}
    If a learning scheme is entropic, it cannot be deterministic. For all practical purposes, learning schemes are either deterministic (i.e.\ $k$-nearest neighbors) or entropic (i.e.\ randomly initialized neural nets trained under stochastic gradient descent).
\end{remark}

\begin{definition}[Perfect unlearning]
\label{def:perfect_unlearning}
    We say a forgetting learning scheme achieves \emph{perfect unlearning} algorithm if, for all $M = \learn(\modelinit(1^\lambda), \trainset)$, the following always holds:
    $$\learn(\modelinit(1^\lambda), \retainset) = \unlearn(M, \forgetset)$$
    This is to say, \unlearn\ is perfect if it produces \emph{exactly the same} model as retraining on the retain set.
\end{definition}

\begin{remark}
    A perfect unlearning algorithm is distinct from the notion of exact unlearning described in \S\ref{sec:background}. Literature that explores exact unlearning rewinds the learning process to the first step that used items from the forget set; in other words, it implements the below:
    $$\learn\left(\learn\left(\modelinit\left(1^\lambda\right), \retainset\right), \retainset\right)$$
\end{remark}

Recall that in our definition, $\adv$ is given $\unlearn$, the description of the unlearning method, and also has access to the original model $\origmodel$. Intuitively, this means that an adversary can simply \emph{run the unlearning method on its own}.

Because the unlearning algorithm is deterministic and the learning scheme is entropic, this means that only one of the two models will exactly match the adversary's own computed result with high probability and allow the adversary distinguish with non-negligible probability.

We now prove the above for entropic learning schemes that are forgetting and achieve computational unlearning.

\begin{theorem}
\label{thm:deterministic-entropic}
There are no deterministic computational unlearning algorithms for entropic learning schemes.
\end{theorem}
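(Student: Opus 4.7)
The plan is to construct an explicit adversary $\adv$ that exploits the determinism of $\unlearn$ to win the game of Definition~\ref{def:white-box-computational-unlearning} with non-negligible advantage. The central observation is that $\adv$ receives $\origmodel$ and the description of $\unlearn$, and itself chose $\forgetset$, so if $\unlearn$ is deterministic then $\adv$ can locally recompute $M^* := \unlearn(\origmodel, \forgetset)$ and obtain \emph{exactly} the same bits as $\unlearnedmodel$. The control $\controlmodel$, in contrast, is a fresh sample from the entropic distribution $\learn(\modelinit(1^\lambda), \retainset)$ and, with good probability, will not equal $M^*$, which lets $\adv$ read off $b$ by matching $M^*$ against the pair it receives.

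Concretely, I would have $\adv$ compute $M^*$ as above and, upon receiving $(M_0, M_1)$, output $b' = 1$ when $M_0 = M^* \neq M_1$, output $b' = 0$ when $M_1 = M^* \neq M_0$, and guess uniformly at random otherwise. Since $M^* = \unlearnedmodel$ always, the only event in which $\adv$ cannot identify the unlearned side unambiguously is the collision event $\controlmodel = M^*$. Writing $p_{\text{col}} := \Pr[\controlmodel = M^*]$, a direct calculation gives $\Pr[b' = b] = \tfrac{1}{2} + \tfrac{1}{2}(1 - p_{\text{col}})$, so it suffices to show that $1 - p_{\text{col}}$ is non-negligible in $\lambda$.

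The step I expect to be the main obstacle is converting the Shannon-entropy hypothesis $H(\controlmodel) \geq 1$ into such a lower bound, because Shannon entropy alone does not directly control the largest atom $p_{\max}$ of a distribution --- one could in principle have $H \geq 1$ while $p_{\max}$ is arbitrarily close to $1$ if the support is enormous. To close this gap I would exploit the p.p.t.\ nature of $\learn$: the algorithm uses at most $T(\lambda)$ random bits for some polynomial $T$, so its output distribution has support of size at most $2^{T(\lambda)}$. The standard grouping bound $H(\controlmodel) \leq H_2(p_{\max}) + (1 - p_{\max}) \log(|\mathrm{supp}(\controlmodel)| - 1)$ then forces $1 - p_{\max} \geq 1/Q(\lambda)$ for some polynomial $Q$: otherwise both $H_2(p_{\max})$ and $(1 - p_{\max}) \cdot T(\lambda)$ would be negligible, driving the right-hand side below $1$. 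Since $p_{\text{col}} \leq p_{\max}$ for every realization of $\origmodel$, this bound survives averaging over $\origmodel$ and gives $\adv$ advantage at least $1/(2Q(\lambda))$, contradicting computational unlearning.

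Finally, I would verify that the trivial-solution guardrails cannot rescue the scheme: the argument makes no use of $\util$ or $\cost$, so it rules out \emph{every} deterministic $\unlearn$ paired with an entropic $\learn$, independent of whether the pair satisfies the utility and cost constraints stipulated after Definition~\ref{def:black-box-computational-unlearning}.
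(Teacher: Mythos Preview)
Your approach is essentially the same as the paper's: both have $\adv$ locally recompute $\unlearn(\origmodel,\forgetset)$, match it against the received pair, and argue that the only failure mode is the collision $\controlmodel = \unlearnedmodel$, whose probability is bounded by $p_{\max}$. The paper stops after writing $1-p_{\max} < \tfrac{1}{2}+\mathrm{negl}(\lambda)$ and asserting a contradiction; you go further and actually justify why Shannon entropy $\geq 1$ forces $1-p_{\max}$ to be non-negligible, by invoking the polynomial randomness of $\learn$ together with the grouping inequality $H \leq H_2(p_{\max}) + (1-p_{\max})\log(|\mathrm{supp}|-1)$. That extra step is a genuine strengthening: from the Shannon-entropy hypothesis alone (without the support bound) $p_{\max}$ can sit arbitrarily close to $1$, so the paper's final line is, strictly speaking, unjustified, and your argument is the one that closes the gap.
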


\begin{proof}[Proof of Theorem \ref{thm:deterministic-entropic}]
Suppose that a forgetting learning scheme is entropic. Therefore, $\learn\left(\modelinit\left(1^\lambda\right), \trainset\right)$ is a randomized algorithm that samples some $h \in \hypspace$ with minimum entropy greater than 1 bit. Let $\mathbb{P}(h)$ be the probability that $\learn$ samples a particular $h \in \hypspace$ and let 
$$p_{\mathrm{max}} = \underset{\forall h \in \mathcal{H}}{\max} \mathbb{P}(h)$$

Now suppose that the challenger uses a deterministic $\unlearn$ algorithm. Then the adversary can also run $\unlearn$ on $\origmodel$ and will win the game if $\controlmodel \neq \unlearnedmodel$. Because the probability $\learn$ will output a particular model is bounded by $p_{\mathrm{max}}$, the probability that $\controlmodel = \unlearnedmodel$ is also bounded by $p_{\mathrm{max}}$ and the probability $\controlmodel \neq \unlearnedmodel$ is at least $1 - p_{\mathrm{max}}$. Because $\unlearn$ is a computational unlearning algorithm, we must have that $1 - p_{\mathrm{max}} < \frac{1}{2} + \mathrm{negl}(\lambda)$. We can rearrange symbols to get that $\mathrm{negl}(\lambda) > \frac{1}{2} - p_{\mathrm{max}}$. But we have a contradiction because $p_{\mathrm{max}}$ does not asymptotically approach $\frac{1}{2}$ as $\lambda$ approaches infinity.
\end{proof}

We now show that a forgetting learning scheme that is deterministic and achieves computational unlearning must be perfect. The intuition for this result is similar: the adversary has access to $\learn$, the description of the learning algorithm, and has access to the $\retainset$. This means that the adversary can compute the control model on their own, use its own control model to identify the control model provided by the challenger, and distinguish with non-negligible probability.

\begin{theorem}
\label{thm:deterministic-perfect}
Let $\mathcal{L}$ be a forgetting learning scheme that is deterministic. Then if it satisfies the computational unlearning notion of Definitions \ref{def:white-box-computational-unlearning} and \ref{def:black-box-computational-unlearning} it must perfectly unlearn under Definition \ref{def:perfect_unlearning}.
\end{theorem}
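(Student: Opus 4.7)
The plan is to prove the contrapositive of Theorem~\ref{thm:deterministic-perfect}: assuming $\mathcal{L}$ is deterministic but fails perfect unlearning, I will construct an efficient adversary $\adv$ that breaks computational unlearning. The central observation (foreshadowed in the paragraph preceding the theorem) is that once $\learn \circ \modelinit$ is deterministic, $\adv$ has enough information --- namely $\learn$, $\modelinit$, and the ability to derive $\retainset$ after choosing $\forgetset$ --- to recompute $\controlmodel$ for itself and use it as a ``fingerprint'' to identify the challenger's pair.

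First I would unpack the negation of Definition~\ref{def:perfect_unlearning} to extract a specific witness $(\trainset_0, \forgetset_0)$ with $\forgetset_0 \subseteq \trainset_0$ such that, writing $\origmodel = \learn(\modelinit(1^\lambda), \trainset_0)$, one has $\unlearn(\origmodel, \forgetset_0) \neq \learn(\modelinit(1^\lambda), \trainset_0 \setminus \forgetset_0)$. I would then describe $\adv$: in step 2 of the game it submits $\trainset_0$; in step 4 it submits $\forgetset_0$; upon receiving $[M_1, M_2]$ from $\chal$, it locally computes $\controlmodel' \leftarrow \learn(\modelinit(1^\lambda), \trainset_0 \setminus \forgetset_0)$ and outputs $b' = 0$ if $M_1 = \controlmodel'$ and $b' = 1$ otherwise. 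Because the scheme is deterministic, $\controlmodel' = \controlmodel$ with probability $1$, and by choice of witness $\unlearnedmodel \neq \controlmodel$, so exactly one element of the pair matches $\controlmodel'$ regardless of $b$. Short casework then yields $\mathbb{P}(b' = b) = 1 > \tfrac{1}{2} + \mathrm{negl}(\lambda)$, contradicting Definition~\ref{def:white-box-computational-unlearning}; since $\adv$ only runs $\learn$ once plus an equality check, it remains p.p.t.

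The main obstacle I expect is the black-box variant (Definition~\ref{def:black-box-computational-unlearning}), where $\adv$ only sees oracles $\mathcal{O}_{M_1}, \mathcal{O}_{M_2}$ and cannot directly test weight equality. Here I would strengthen the witness so that $\unlearnedmodel$ and $\controlmodel$ differ not merely as weight vectors but as functions on some input $x^\ast$ on which local $\infer(\controlmodel', x^\ast)$ disagrees with $\infer(\unlearnedmodel, x^\ast)$; then $\adv$ queries both oracles on $x^\ast$ and matches against its locally computed $\infer(\controlmodel', x^\ast)$. A secondary subtlety worth addressing is that the paper's ``deterministic learning scheme'' definition only pins down $\learn \circ \modelinit$ and is silent about $\unlearn$; if $\unlearn$ is allowed to be randomized, the same argument shows that any probability mass $\unlearn$ places outside $\{\controlmodel\}$ translates directly into distinguishing advantage, forcing $\unlearn(\origmodel, \forgetset) = \controlmodel$ with probability $1$ --- which is again perfect unlearning in the sense of Definition~\ref{def:perfect_unlearning}.
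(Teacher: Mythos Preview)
Your proposal is correct and follows essentially the same route as the paper: the adversary exploits determinism of $\learn\circ\modelinit$ to recompute $\controlmodel$ locally and then matches it against the challenger's pair. The paper organizes the argument as an explicit two-case split on whether $\unlearn$ is randomized or deterministic, whereas you do the deterministic-$\unlearn$ case first via a contrapositive witness and then fold the randomized-$\unlearn$ possibility in at the end; the underlying reasoning is the same. Your version is in fact slightly more thorough: you explicitly address the black-box game by passing to a functional witness $x^\ast$, which the paper's proof does not spell out, and your handling of randomized $\unlearn$ (mass outside $\{\controlmodel\}$ becomes advantage) is cleaner than the paper's $p_{\max}$ argument. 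One small caveat worth tightening in your write-up: the distinguishing-advantage computation only forces $\unlearn(\origmodel,\forgetset)=\controlmodel$ with probability $1-\mathrm{negl}(\lambda)$, not literally probability $1$, so the conclusion is ``perfect up to negligible failure'' rather than the strict equality of Definition~\ref{def:perfect_unlearning}; the paper's proof has the same slack.
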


\begin{proof}[Proof of \ref{thm:deterministic-perfect}]
Suppose that $\mathcal{L}$ is a deterministic learning scheme. Therefore, it must output a single model for a given training set $\trainset$. Suppose $\mathcal{L}$ is also forgetting and achieves computational unlearning. We now consider two possible cases: that $\unlearn$ is randomized and that it is deterministic.

\begin{itemize}
    \item {
        \textit{Randomized case:} Suppose that $\unlearn$\ is a randomized algorithm that samples some $h \in \hypspace$. Let $\mathbb{P}(h)$ be the probability that $\unlearn$ selects a particular $h \in \hypspace$ and let 
        
        $$p_{\mathrm{max}} = \underset{\forall h \in \mathcal{H}}{\max} \mathbb{P}(h)$$
    
        Recall that in this scenario, the challenger uses a deterministic $\learn$ algorithm to produce $M_c$. Then the adversary can also run $\learn$ to produce $\controlmodel$ and will win the game if $\controlmodel \neq \unlearnedmodel$. Because the probability $\unlearn$ will output a particular model is bounded by $p_{\mathrm{max}}$, the probability that $\controlmodel = \unlearnedmodel$ is also bounded by $p_{\mathrm{max}}$ and the probability $\controlmodel \neq \unlearnedmodel$ is at least $1 - p_{\mathrm{max}}$. Because $\unlearn$ is a computational unlearning algorithm, we must have that $1 - p_{\mathrm{max}} < \frac{1}{2} + \mathrm{negl}(\lambda)$. We can rearrange symbols to get that $\mathrm{negl}(\lambda) > \frac{1}{2} - p_{\mathrm{max}}$. But we have a contradiction because $p_{\mathrm{max}}$ does not asymptotically approach $\frac{1}{2}$ as $\lambda$ approaches infinity.
    }

    \item {
        \textit{Deterministic case:} Now suppose that $\unlearn$ is a deterministic algorithm. Then the adversary can also run $\learn$ and $\unlearn$ on $\origmodel$ and will win the game if $\controlmodel \neq \unlearnedmodel$. Because $\learn$ and $\unlearn$ are deterministic and will each output a particular model for a given dataset, we must have that $\controlmodel = \unlearnedmodel$. Thus, $\unlearn$ must be a \emph{perfect} unlearning algorithm.
}
\end{itemize}
\end{proof}

\begin{remark}[Viability of computational unlearning methods]
These results constrain the space of learning algorithms that are compatible with unlearning. To reiterate: Theorem \ref{thm:deterministic-entropic} shows that entropic learning schemes that are forgetting and achieve computational unlearning must have a randomized unlearning method. In the opposite direction, no deterministic learning algorithms can support entropic unlearning algorithms. Any deterministic learning scheme that is forgetting and achieves computational unlearning must implicitly realize a \emph{perfect} unlearning scheme, as noted in Theorem \ref{thm:deterministic-perfect}. As a consequence of these findings, any forgetting learning schemes that achieves computational unlearning must either be perfect, or both the learning and unlearning process must inherently be randomized. Note that Certified Deep Unlearning~\autocite{2024Zhang_Certified_Deep_Unlearning} and many heuristic unlearning methods we studied in \S\ref{sec:empirical} are not randomized and are not perfect. Thus, they can never achieve computational unlearning.
\end{remark}

\subsection{Computational Unlearning from Differential Privacy Collapses Utility}
\label{sec:cu-dp}

One natural approach to constructing computational unlearning uses techniques from differential privacy~\autocite{2014dwork_dp}.

While differentially private learning algorithms imply the existence of black-box computational unlearning, the parameters choices required to achieve computational unlearning will lead to utility collapse for the resulting models. We claim that the $\epsilon$ and $\delta$ parameters must be phrased in terms of $\lambda$ and that values needed to obtain security imply unacceptably high utility loss.

We begin by recalling the definition of privacy loss and differential privacy.

\begin{definition}[Privacy Loss, \autocite{2014dwork_dp}]
\label{def:privacy-loss}
The privacy loss $\mathcal{L}$ over neighboring databases $x, y$ after observing $\xi$ is given by:

$$\mathcal{L}^{(\xi)}_{\mathcal{M}(x)\|\mathcal{M}(y)} = \ln\left(\frac{\mathbb{P}(\mathcal{M}(x) = \xi)}{\mathbb{P}(\mathcal{M}(y) = \xi)}\right)$$
\end{definition}

\begin{definition}[Differential Privacy, \autocite{2014dwork_dp}]
\label{def:statistical-differential-privacy}
A randomized algorithm $\mathcal{M}$ with domain $\mathbb{N}^{|\mathcal{X}|}$ is $(\epsilon, \delta)$-differentially private if for all $\mathcal{S} \subseteq \mathrm{Range}(\mathcal{M})$ and for all $x, y \in \mathbb{N}^{|\mathcal{X}|}$ such that $\|x-y\|_1 \leq 1$:

$$\mathbb{P}(\mathcal{M}(x) \in \mathcal{S}) \leq e^{\epsilon} \cdot 
\mathbb{P}(\mathcal{M}(y) \in \mathcal{S}) + \delta$$

If $\delta = 0$, we say that $\mathcal{M}$ is $\epsilon$-differentially private.
\end{definition}

Differential privacy's definition bounds the privacy loss from any query, which we discuss below.

\begin{remark}[Privacy Loss Bounded for Differentially Private Algorithms, \autocite{2014dwork_dp}]
Suppose that $\mathcal{M}$ is a $(\epsilon, \delta)$-differentially private algorithm. Then by definition, the absolute value of the privacy loss $\mathcal{L}^{(\xi)}_{\mathcal{M}(x)\ \|\ \mathcal{M}(y)}$ is bounded by $\epsilon$ with probability at least $1-\delta$.
\end{remark}

\begin{remark}[Differential Privacy is Immune to Post-Processing, \autocite{2014dwork_dp}]
\label{rem:post-processing}
Additionally, one of the most useful properties of differential privacy is that it is ``immune'' to post-processing. This means that there exists no algorithm that, given the output of a differentially-private function, can ``undo'' the differential privacy. We refer the reader to \autocite[Proposition~2.1]{2014dwork_dp} for the proof of this claim. 
\end{remark}

We will use this property to show that differential privacy can be used to satisfy the definition of black-box computational unlearning (Definition \ref{def:black-box-computational-unlearning}).

\begin{lemma}
\label{lem:privacy-loss-entropy}
Privacy Loss is an upper bound on relative entropy.
\end{lemma}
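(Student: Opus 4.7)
The plan is to observe that the relative entropy between $\mathcal{M}(x)$ and $\mathcal{M}(y)$ is \emph{exactly} the expectation of the privacy loss random variable when outcomes are sampled from $\mathcal{M}(x)$, and then to invoke the elementary fact that the expectation of a random variable is bounded above by its pointwise supremum.

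First, I would unfold the definition of KL divergence and fold in Definition~\ref{def:privacy-loss} to obtain the identity
\begin{equation*}
\kld\!\left(\mathcal{M}(x) \,\|\, \mathcal{M}(y)\right) \;=\; \sum_{\xi} \mathbb{P}(\mathcal{M}(x)=\xi)\, \ln\frac{\mathbb{P}(\mathcal{M}(x)=\xi)}{\mathbb{P}(\mathcal{M}(y)=\xi)} \;=\; \mathbb{E}_{\xi \sim \mathcal{M}(x)}\!\left[\mathcal{L}^{(\xi)}_{\mathcal{M}(x)\|\mathcal{M}(y)}\right].
\end{equation*}
This step is nothing more than relabeling the summand: the logarithm appearing inside the expectation is literally the privacy loss evaluated at the point $\xi$. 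No clever manipulation is required.

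Second, I would conclude by applying the fact that for any random variable $Z$ we have $\mathbb{E}[Z] \leq \sup Z$, which yields
\begin{equation*}
\kld\!\left(\mathcal{M}(x) \,\|\, \mathcal{M}(y)\right) \;\leq\; \sup_{\xi}\, \mathcal{L}^{(\xi)}_{\mathcal{M}(x)\|\mathcal{M}(y)},
\end{equation*}
so the worst-case privacy loss upper bounds the relative entropy. In the intended application where $\mathcal{M}$ is $\epsilon$-differentially private the privacy loss is pointwise bounded by $\epsilon$, so the bound collapses to $\kld \leq \epsilon$, which is the form that will feed into the subsequent utility-collapse argument in \S\ref{sec:cu-dp}.

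I do not expect any substantive obstacle; the lemma is essentially a one-line reinterpretation of the definition of KL divergence. The only care needed for a fully general $(\epsilon,\delta)$-DP statement would be to split the expectation into the $(1-\delta)$-probability event on which the privacy loss is bounded by $\epsilon$ and the $\delta$-probability residual event on which the log-ratio could in principle blow up, absorbing the latter into an additive correction term. That refinement is standard and does not alter the overall strategy of expressing relative entropy as expected privacy loss and then bounding the expectation by the supremum.
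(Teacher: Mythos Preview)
Your proposal is correct and is essentially the same argument as the paper's: both identify $\kld(\mathcal{M}(x)\|\mathcal{M}(y))$ as the expectation (over $\xi\sim\mathcal{M}(x)$) of the privacy loss $\mathcal{L}^{(\xi)}$ and then bound that expectation by the pointwise maximum $\mathcal{L}_{\max}$. If anything, your presentation via $\mathbb{E}[Z]\le\sup Z$ is a bit crisper than the paper's, which writes the bound as $\sum_{x}\mathcal{L}_{\max}$ without explicitly using $\sum_x P(x)=1$.
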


\begin{proof}[Proof of Lemma \ref{lem:privacy-loss-entropy}]
Recall the definition of relative entropy (Kullback-Leibler divergence) of probability distribution $Q$ with respect to $P$ \autocite{1951KLDivergence}:

\begin{equation}
    \label{eqn:kld}
    \kld\left(P\ \|\ Q \right) = \sum_{x \in \mathcal{X}}P(x)\ln\left(\frac{P(x)}{Q(x)}\right) 
\end{equation}

Now, suppose we have some randomized algorithm $\mathcal{M}$ with inputs $a, b$. Let $P, Q$ represent the output distributions of $\mathcal{M}(a), \mathcal{M}(b)$ respectively. Let $\mathcal{L}_\text{max}$ refer to the maximum privacy loss observed for any element $x$.

\begin{align*}
    (\ref{eqn:kld}) &= \sum_{x \in \mathcal{X}}P(x)\ln\left(\frac{\mathbb{P}(\mathcal{M}(a) = x)}{\mathbb{P}(\mathcal{M}(b) = x)}\right) \\
    &= \sum_{x \in \mathcal{X}}P(x)\mathcal{L}^{(x)}_{\mathcal{M}(a)\ \|\ \mathcal{M}(b)} \\
    &\leq \sum_{x \in \mathcal{X}}\mathcal{L}_\text{max} \\
\end{align*}

Because $P$ is a probability distribution, we have that $P(x) \in [0, 1]$. Then privacy loss is an upper bound because the relative entropy is equal to the privacy loss multiplied by $P(x)$ by definition. 
\end{proof}

Having reviewed important properties of differential privacy, we now show how to construct black-box computational unlearning (Definition \ref{def:black-box-computational-unlearning}) from differential privacy. There are two main ways to accomplish this: to use differential privacy directly or to aggregate the outputs of models in a differentially private way. The theorem below captures both of these cases.

\begin{theorem}[Differentially private computational unlearning]
\label{thm:differentially-private-computational-unlearning}
Let $\mathcal{L}$ be a forgetting learning scheme that achieves black-box computational unlearning. Let $\unlearn$ simply output the original model (with fresh randomness for the differentially private mechanism). Then $\learn$ and $\unlearn$ satisfy the definition of black-box computational unlearning (Definition \ref{def:black-box-computational-unlearning}) if and only if $\delta \leq \mathrm{negl}\left(\lambda\right)$ and let $\epsilon \leq \ln \left( 1 + \mathrm{negl}\left(\lambda\right)\right)$.
\end{theorem}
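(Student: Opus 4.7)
The plan is to prove the biconditional in two directions. Observe first that because $\unlearn$ simply outputs $\origmodel$ (with fresh randomness drawn for the DP mechanism), the oracle $\mathcal{O}_{\unlearnedmodel}$ samples from a DP inference mechanism whose underlying training set is $\trainset$, while $\mathcal{O}_{\controlmodel}$ samples from the same mechanism whose underlying training set is $\retainset$. By Remark \ref{rem:post-processing}, no post-processing by $\adv$ can increase distinguishability beyond the statistical distance between these two induced output distributions, so the entire argument reduces to bounding (or lower-bounding) that distance in terms of $\epsilon$ and $\delta$.

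For the sufficient direction, assume $\delta \leq \mathrm{negl}(\lambda)$ and $\epsilon \leq \ln(1+\mathrm{negl}(\lambda))$, so $e^\epsilon - 1$ is negligible. Since $\adv$ is p.p.t., it issues $q = \mathrm{poly}(\lambda)$ queries to each oracle. I would first apply group privacy to lift the $(\epsilon,\delta)$-DP guarantee to datasets differing by $|\forgetset|$ elements, then apply basic composition across $q$ queries, yielding privacy loss bounded by $q\,|\forgetset|\,\epsilon$ except with probability at most $q\,|\forgetset|\,\delta$. Lemma \ref{lem:privacy-loss-entropy} then bounds the KL divergence between the joint query distributions by this same quantity, and Pinsker's inequality (or the direct $(\epsilon,\delta)$-DP statistical distance bound $1 - e^{-\epsilon} + \delta$) converts this into a negligible bound on total variation. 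Consequently $\mathbb{P}(b' = b) < \tfrac{1}{2} + \mathrm{negl}(\lambda)$, matching Definition \ref{def:black-box-computational-unlearning}.

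For the necessary direction, I would argue by contrapositive: if either $\delta$ or $e^\epsilon - 1$ is non-negligible (where $\epsilon,\delta$ are understood as the tight parameters of the DP mechanism), I construct an explicit distinguisher. Letting $\adv$ choose $|\forgetset| = 1$ makes $\trainset$ and $\retainset$ neighboring in the DP sense, so by tightness there exist an input pair and an output set $\mathcal{S} \subseteq \mathrm{Range}(\mathcal{M})$ witnessing the DP bound. The adversary issues a handful of queries, checks which oracle's responses fall in $\mathcal{S}$ more often, and outputs $b'$ accordingly. A standard calculation shows the advantage is on the order of $(e^\epsilon - 1)/(e^\epsilon + 1) + \delta$, which is non-negligible under the assumption, contradicting black-box computational unlearning.

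The main obstacle is the necessary direction, because DP is phrased as an upper bound and in principle a mechanism could have nominal parameters $(\epsilon,\delta)$ while exhibiting much smaller actual distinguishing distance; the distinguisher argument only bites if $(\epsilon,\delta)$ denote the optimal (tight) parameters of $\mathcal{M}$. I would resolve this either by adopting the standard convention that the cited parameters are the smallest for which the DP inequality holds, or equivalently by working with the privacy-loss random variable and observing that a non-negligible privacy loss over a neighboring pair directly yields a hypothesis test with non-negligible advantage. The sufficient direction is comparatively mechanical: it chains Lemma \ref{lem:privacy-loss-entropy}, group privacy, and composition, all of which preserve negligibility under the stated parameter regime.
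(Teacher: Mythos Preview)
Your proposal follows essentially the same approach as the paper: the sufficient direction bounds relative entropy by privacy loss via Lemma~\ref{lem:privacy-loss-entropy} and invokes post-processing immunity (Remark~\ref{rem:post-processing}), while the necessary direction argues that non-negligible $\epsilon$ or $\delta$ yields a distinguisher after polynomially many queries. You are in fact more careful than the paper on several points it leaves implicit---explicitly invoking group privacy for $|\forgetset|>1$, composition over the $q$ queries, Pinsker's inequality to pass from KL to total variation, and the tightness caveat that the reverse direction requires $(\epsilon,\delta)$ to be the optimal parameters of the mechanism.
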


\begin{proof}[Proof of Theorem \ref{thm:differentially-private-computational-unlearning}]
Observe that the privacy loss is negligible in $\lambda$ with overwhelming probability. This means that the relative entropy between the outputs of $\unlearnedmodel$ and $\controlmodel$ is negligible by Lemma \ref{lem:privacy-loss-entropy}. By Remark \ref{rem:post-processing}, there is no algorithm an adversary can use to increase the relative entropy. So then $\unlearnedmodel$ and $\controlmodel$ are computationally indistinguishable.

We now show that our bounds are tight. Suppose that $\delta > \mathrm{negl}(\lambda)$. Then the privacy loss guarantee does not hold with overwhelming probability and an adversary could obtain a query result with non-negligible privacy loss after a polynomial number of queries.

Alternatively, suppose that $\epsilon > \ln \left( 1 + \mathrm{negl}\left(\lambda\right)\right)$. Then the privacy loss guarantee is at least polynomial in $\lambda$ and an adversary could obtain query results that lead to a non-negligible privacy loss after a polynomial number of queries.
\end{proof}

Unfortunately this approach also has the following undesirable result:

\begin{corollary}
\label{cor:utility-collapse}
Let $\mathcal{L}$ be a forgetting learning scheme that achieves black-box computational unlearning, with \learn\ implemented as described in Theorem 
\ref{thm:differentially-private-computational-unlearning}. Then $\unlearnedmodel$ and $\origmodel$ are also computationally indistinguishable. This implies that the utility of $\origmodel$ is equivalent to the utility of $\unlearnedmodel$.
\end{corollary}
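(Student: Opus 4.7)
My plan is to argue directly from the form of $\unlearn$ mandated by Theorem \ref{thm:differentially-private-computational-unlearning}: $\unlearn$ re-emits the original model with fresh DP randomness, without ever consulting $\forgetset$. Consequently, $\unlearnedmodel$ is a fresh sample from the distribution $\learn(\modelinit(1^\lambda), \trainset)$, which is exactly the distribution from which $\origmodel$ was drawn. The two random variables $\unlearnedmodel$ and $\origmodel$ are therefore identically distributed (two independent draws from a common distribution over $\hypspace$), so every distinguisher --- even an unbounded one --- has advantage zero. This already yields $\unlearnedmodel \compind \origmodel$ without any appeal to the DP machinery beyond the syntactic fact that $\unlearn$ does not change the dataset on which $\learn$ is run.

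For the utility claim, I would then invoke the Utility Equivalence remark from \S\ref{sec:game}: if $|\util(\origmodel) - \util(\unlearnedmodel)|$ were non-negligible, the adversary could efficiently compute both utility values, threshold against their midpoint, and distinguish with non-negligible advantage, contradicting the indistinguishability just established. Hence the utilities agree up to a negligible function. Chaining this with the standing hypothesis $\unlearnedmodel \compind \controlmodel$ gives $\util(\origmodel) \approx \util(\controlmodel)$, and combined with the stringent $\epsilon, \delta$ bounds forced by Theorem \ref{thm:differentially-private-computational-unlearning}, the common utility is pinned to the baseline $b$ from the meaningful-utility remark.

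The main obstacle is not technical --- the argument is essentially immediate once $\unlearn$'s definition is unpacked --- but interpretive. One must be careful to distinguish between the superficial reading that ``$\unlearn$ trivially outputs the same model'' and the formal claim that the inference-time output distribution of $\unlearnedmodel$ is identical to that of $\origmodel$ because of the fresh DP randomness. The proof thereby makes precise the tension advertised by the section title: the only way to use differential privacy to satisfy black-box computational unlearning is to push $\epsilon, \delta$ so close to $0$ that the learned model becomes essentially independent of its training data, which is exactly what it means for utility to collapse.
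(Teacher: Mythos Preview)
Your first paragraph rests on a misreading of the construction in Theorem~\ref{thm:differentially-private-computational-unlearning}. There, $\unlearn$ does \emph{not} re-run $\learn(\modelinit(1^\lambda), \trainset)$ to produce a fresh independent sample; it simply hands back $\origmodel$ itself, with fresh coins supplied to the differentially private inference mechanism. Thus $\unlearnedmodel$ and $\origmodel$ are not ``two independent draws from a common distribution over $\hypspace$'' --- they share \emph{identical} parameters. Your independence claim is false, and the argument ``identically distributed $\Rightarrow$ advantage zero'' would in any case be suspect here, since in the game the adversary already holds $\origmodel$ explicitly (step~3) and could test an independent sample for equality against it. The correct (and simpler) reason indistinguishability holds is the one the paper gives: because $\unlearnedmodel$ and $\origmodel$ are literally the same underlying model, their black-box oracles have identical response distributions, so no distinguisher exists.

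Your utility argument is on the right track but stops short of the paper's explicit mechanism. The paper makes the collapse concrete by exploiting that the adversary may choose $\forgetset = \trainset$: then $\controlmodel = \learn(\modelinit(1^\lambda), \emptyset)$, whose utility is the baseline $b$. Chaining $\util(\origmodel) \compind \util(\unlearnedmodel) \compind \util(\controlmodel)$ pins $\util(\origmodel)$ to $b$. You gesture at this in your final paragraph (``essentially independent of its training data'') but never invoke the adversarial choice of $\forgetset$ that makes the argument go through; without it, the chain $\util(\origmodel) \approx \util(\controlmodel)$ only tells you the utilities of models trained on $\trainset$ and on $\retainset$ agree, which is unremarkable for small $\forgetset$.
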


\begin{proof}[Proof of Corollary \ref{cor:utility-collapse}]
We follow the proof of Theorem \ref{thm:differentially-private-computational-unlearning}. Observe that the privacy loss is negligible in $\lambda$ with overwhelming probability. This means that the relative entropy between the outputs of $\unlearnedmodel$ and $\controlmodel$ is negligible. But $\unlearnedmodel$ is the same model as $\origmodel$, with fresh randomness for the differential privacy mechanism. So $\unlearnedmodel$ and $\origmodel$ are also computationally indistinguishable.

In other words, this means that $\util\left(\origmodel\right) \compind \util\left(\unlearnedmodel\right)$. Since $\chal$ does not know {\em a priori} the choice of $\adv$, $\unlearn$ must be indistinguishable for all possible choices. So then $\unlearnedmodel \compind \controlmodel$ for $\forgetset = \trainset$. That is to say that $\unlearnedmodel \compind \learn\left(\modelinit\left(1^\lambda\right), \emptyset\right)$. But we because $\util(\origmodel) \compind \util(\unlearnedmodel)$ we also have $\util\left(\origmodel\right) \compind \util\left(\learn\left(\modelinit\left(1^\lambda\right), \emptyset\right)\right)$, which is bounded by a small $\epsilon$ and thus not meaningful.
\end{proof}

\begin{remark}[Black-box infeasibilty implies white-box infeasibility.]
The security notion of white-box computational unlearning in Definition~\ref{def:white-box-computational-unlearning} is strictly stronger than the black-box computational unlearning of Definition~\ref{def:black-box-computational-unlearning}. Thus, the an infeasibility result for black-box computational unlearning immediately implies an infeasibility result for white-box computational unlearning.
\end{remark}

We note that most use cases of differentially private $\infer$ algorithms are designed to support a number of queries bounded by a constant. One possible interpretation of our result is that we assume an adversary is able to query the model some polynomial number of times.

We additionally stress that Theorem~\ref
{thm:differentially-private-computational-unlearning} and Corollary~\ref{cor:utility-collapse} only consider applying differential privacy to the $\infer$ algorithm of a learning scheme. Our result does not necessarily imply a utility collapse for a forgetting learning scheme that achieves computational unlearning with a differentially private $\learn$ algorithm.
\section{Discussion}
\label{sec:discussion}

\paragraph{Several unlearning methods are deterministic.}
Several unlearning methods in the literature are deterministic. This means that an adversary will always be able to distinguish in the computational unlearning game (see \S\ref{sec:no-deterministic-computational-unlearning}). In particular, \autocite{2023Guo_Certified_Removal, 2024Zhang_Certified_Deep_Unlearning, 2023FosterSSD, 2020Graves_Amnesiac} each provide deterministic unlearning methods and will never achieve computational unlearning. This includes various certified removal and approximate techniques; while they are randomized during the $\learn$ process they are not randomized during the $\unlearn$ process.

In contrast to the above methods, \autocite{2016Abadi_DP_SGD, 2023Chundawat_bad_teaching} are randomized. Therefore, it is possible that they could achieve computational unlearning if other issues leading to distinguishing attacks are resolved.

See Table \ref{tbl:summary} for an overview.

\begin{table*}[t]
    \begin{center}
        \begin{threeparttable}
            \begin{tabular}{rccccccc} \toprule
                Name & Category & Randomized \\ 
                \midrule
                DP-SGD \autocite{2016Abadi_DP_SGD} & Approximate & \cmark \\ 
                Certified Removal \autocite{2023Guo_Certified_Removal, 2024Zhang_Certified_Deep_Unlearning} & Approximate & \xmark \\
                SSD \autocite{2020bourtoule_machine} & Heuristic & \xmark \\
                Amnesiac \autocite{2020Graves_Amnesiac} & Heuristic & \xmark \\
                Bad Teacher \autocite{2023Chundawat_bad_teaching} & Heuristic & \cmark \\ 
                \bottomrule
            \end{tabular}
        \vspace{-9em}
        \caption{Summary of the randomization of unlearning methods.}
        \label{tbl:summary}
        \end{threeparttable}
    \end{center}
\end{table*}

\paragraph{Bounding the difference between models does not imply indistinguishability.} As described in \S\ref{sec:empirical}, we compare (1) the KL divergence between $\origmodel$ and $\controlmodel$ and (2) the KL divergence between $\origmodel$ and the $\unlearnedmodel$ to distinguish certified removal models with advantage far higher than would be expected from the $\delta$ parameter. This may seem unintuitive at first. We diagram in \autoref{fig:certified-removal-distinguishing} how this is possible.

\begin{figure}[ht]
    \centering
    \includegraphics[width=0.5\linewidth]{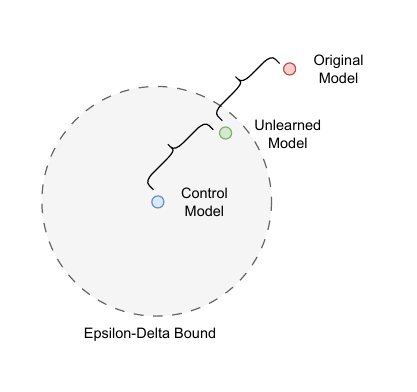}
    \caption{Intuition of how KL divergence is able to distinguish between $\unlearnedmodel$ from certified removal and $\controlmodel$. The distribution produced by the unlearned model is within the $\epsilon, \delta$ bound but the adversary is able to leverage access to the original model to distinguish between the original model and a control model.}
    \label{fig:certified-removal-distinguishing}
\end{figure}

While $\unlearnedmodel$ falls within the certified removal bound around $\controlmodel$, the certified removal bound does not guarantee that $\unlearnedmodel$ will be uniformly distributed within the bound. Because we assume the adversary also has knowledge of $\origmodel$ under our computational unlearning threat model, we can find the model that has the least divergence with the original model. Under other notions of distance, it may also be possible to distinguish based on the \emph{direction} of the differences---not just the magnitude of the differences.

As described in Theorem \ref{thm:differentially-private-computational-unlearning}, conventional choices for $\epsilon$ and $\delta$ are too loose to achieve computational unlearning. They must be phrased in terms of $\lambda$, the security parameter, with $\delta \leq \mathrm{negl}\left(\lambda\right)$ and $\epsilon \leq \ln \left( 1 + \mathrm{negl}\left(\lambda\right)\right)$. This ensures that the privacy loss of every query is negligible. After a polynomial number of queries, the adversary will still have negligible information and thus will have negligible advantage.

When not set properly, the bound is loose enough to permit distinguishing as empirically demonstrated in \S\ref{sec:empirical}.

\subsection{Future work}

\paragraph{Relaxations of our definition.} As described in \autocite{2014dwork_dp}, the Fundamental Law of Information Recovery states that ``overly accurate answers to too many questions will destroy privacy in a spectacular way.'' For this reason, most research into differential privacy considers a bounded query model; there is at most some fixed number of queries that the adversary can make. We claim that this is impractical for unlearning. This would still require complete retraining of the model and avoiding this is the entire point of unlearning.

While we do not believe that a constant of queries is a suitable unlearning scenario, we believe there are various other relaxations that may prove useful. For example, it may be possible to let the challenger delete additional information beyond what is selected by the adversary. It is unclear if this would provide meaningful realizations of unlearning, but is more in line with~\autocite{2020bourtoule_machine}. Another possible relaxation could be to allow for some bounded, non-negligible adversary advantage.

Alternative relaxations could include giving the adversary less information, such as not giving them access to the original model ($\origmodel$). Our analysis in \S\ref{sec:theory} depends on this information being available to the adversary. In cases where this information is not available, certified removal with a sufficiently tight bound may be a viable alternative.

\paragraph{Alignment of generative models via unlearning.} This work focuses on image classification models to facilitate iterative experimentation but our definition naturally extends to generative models, including large language models (LLMs). In this context, unlearning can be viewed as an alignment technique. However, this presents additional challenges. 

Language is a discrete sequence-based modality where changes to few tokens in the sequence can cause massive semantic changes to the meaning. Similarly, drastically different input sequences can contain the exact same information. This can make it hard to specify data points that contain the information you wish to forget. It is also unclear if certain concepts are \emph{emergent}, meaning that the model can infer them even without explicit training data. For example, if you wish to remove all the information from a generative model about weapons, it is not as simple as forgetting all data points that contain the word ``weapon.'' This is further complicated because language datasets are often scraped from many different sources, meaning there are many possible sources of information that a user may wish to remove.

Accurately specifying all data points that contain the relevant information is non-trivial and an open area of research. It may be possible to use an embedding of some kind to determine semantic similarity, but the effect this has on downstream models is an open area of research.

\paragraph{Fine-tuning to foundation models.} A common technique is to fine-tune large foundation models (e.g.\ LLMs, diffusion models) to a particular task. It is natural to rephrase our definitions of $\learn$ and $\unlearn$ to be fine-tunings of a foundation model. Because the adversary already has foreknowledge of the foundation model (they have the description of $\learn$) the adversary is only distinguishing the results of the fine-tuning process from a control model.
\section{Conclusion}

In summary, we have proposed computational unlearning, a new framework for evaluating machine unlearning. Computational unlearning is satisfied by an unlearning method if the output of the unlearning method is indistinguishable from a mirror (control) model. We rigorously define indistinguishability in terms of a novel two-party cryptographic protocol which captures an adversary's ability to distinguish between two models. Computational unlearning provides both empirical and theoretical contributions to the field of unlearning by improves upon prior evaluation methods, such as membership inference attack (MIA) scores.

We empirically showed that several machine unlearning methods from literature \autocite{2023FosterSSD, 2020Graves_Amnesiac, 2023Chundawat_bad_teaching, 2024Zhang_Certified_Deep_Unlearning} do not achieve computational unlearning by presenting multiple algorithms that allow an adversary to distinguish between the model produced by an unlearning method and a control model. 

We have identified several theoretical implications that naturally follow from our formal definition of computational unlearning. For example, all unlearning methods that meet our definition of computational unlearning must be randomized; there are no deterministic computational unlearning methods despite there being several deterministic unlearning methods proposed in prior work. We also proved that building computational machine unlearning using differential privacy techniques leads to utility collapse.

Lastly, we outlined various directions for future work. This includes implementing high-utility general-purpose computational unlearning, potential relaxations of our computational unlearning framework, using unlearning to align generative models, and exploring how to incorporate notions of foundation models into computational unlearning.

\printbibliography

\end{document}